\documentclass[12pt]{article}

\usepackage{algorithm}
\usepackage[noend]{algorithmic}
\usepackage{hyperref}

\usepackage{relsize}
\usepackage{sectsty}
\allsectionsfont{\smaller[1]} 

\usepackage{amsmath}
\usepackage{amsthm}
\usepackage{amssymb}

\usepackage{placeins}
\usepackage{pgfplots, pgfplotstable}
\usepackage[margin=1in]{geometry}

\DeclareMathOperator*{\argmax}{arg\,max}

\DeclareMathOperator{\prob}{P}
\DeclareMathOperator{\rank}{rank}
\DeclareMathOperator{\spn}{span}
\usepackage{bbm}

\newtheorem{theorem}{Theorem}
\newtheorem{lemma}[theorem]{Lemma}
\newtheorem{corollary}[theorem]{Corollary}

\theoremstyle{definition}
\newtheorem*{definition}{Definition}

\title{\Large Exact MAP Inference by Avoiding Fractional Vertices}
\author{
\normalsize
Erik M. Lindgren\quad Alexandros G. Dimakis\\
\normalsize
\texttt{erikml@utexas.edu}\quad \texttt{dimakis@austin.utexas.edu}\\
\normalsize
Department of Electrical Engineering \\ \\
\normalsize
Adam Klivans \\
\normalsize
\texttt{klivans@cs.utexas.edu} \\
\normalsize
Department of Computer Science\\ \\
\normalsize
University of Texas at Austin
}

\date{}

\begin{document}
\frenchspacing

\maketitle

\begin{abstract}
Given a graphical model, one essential problem is MAP inference, that is, finding the most likely configuration of states according to the model. Although this problem is NP-hard, large instances can be solved in practice. A major open question is to explain why this is true. We give a natural condition under which we can provably perform MAP inference in polynomial time. We require that the number of fractional vertices in the LP relaxation exceeding the optimal solution is bounded by a polynomial in the problem size. This resolves an open question by Dimakis, Gohari, and Wainwright. In contrast, for general LP relaxations of integer programs, known techniques can only handle a constant number of fractional vertices whose value exceeds the optimal solution. We experimentally verify this condition and demonstrate how efficient various integer programming methods are at removing fractional solutions.
\end{abstract}

\section{Introduction}

Given a graphical model, one essential problem is MAP inference, that is, finding the most likely configuration of states according to the model.

Consider graphical models with binary random variables and pairwise interactions, also known as Ising models. For a graph $G = (V,E)$ with node weights $\theta \in \mathbb{R}^V$ and edge weights $W \in \mathbb{R}^E$, the probability of a variable configuration is given by
\begin{equation}\label{ising-eq}
\prob(X = x) = \frac{1}{Z}\exp\left(\sum_{i \in V}\theta_i x_i + \sum_{ij \in E}W_{ij}x_i x_j\right),
\end{equation}
where $Z$ is a normalizing constant.

The MAP problem is to find the configuration $x \in \{0,1\}^V$ that maximizes Equation \eqref{ising-eq}. We can write 
this as an integer linear program (ILP) as follows: 
\begin{equation}\label{map-ip}
\begin{aligned}
\max_{q \in \mathbb{R}^{V \cup E}}\  &\sum_{i \in V} \theta_i q_i + \sum_{ij \in E} W_{ij} q_{ij} \\
\text{s.t.}\quad&q_i \in \{0, 1\}\quad \forall i \in V \\
&q_{ij} \geq \max\{0, q_i + q_j - 1\}\quad \forall ij \in E \\
& q_{ij} \leq \min\{q_i, q_j\}\quad \forall ij \in E.
\end{aligned}
\end{equation}

The MAP problem on binary, pairwise graphical models contains, as a special case, the Max-cut problem
and is therefore NP-hard. For this reason, a significant amount of attention has focused on analyzing 
the \textit{LP relaxation} of the ILP, which can be solved efficiently in practice.

\begin{equation}\label{local-polytope}
\begin{aligned}
\max_{q \in \mathbb{R}^{V \cup E}}\  &\sum_{i \in V} \theta_i q_i + \sum_{ij \in E} W_{ij} q_{ij} \\
\text{s.t.}\quad&0 \leq q_i \leq 1\quad \forall i \in V \\
&q_{ij} \geq \max\{0, q_i + q_j - 1\}\quad \forall ij \in E \\
& q_{ij} \leq \min\{q_i, q_j\}\quad \forall ij \in E
\end{aligned}
\end{equation}

This relaxation has been an area of intense research in machine learning and statistics.
In \cite{weller2016train}, the authors state that a major open question is to identify why real world instances of Problem \eqref{map-ip} can be solved efficiently despite the theoretical worst case complexity.

We make progress on this open problem by analyzing the \textit{fractional vertices} of the LP relaxation, that is, the extreme points of the polytope with fractional coordinates. Vertices of the relaxed polytope with fractional coordinates are called pseudomarginals for graphical models and pseudocodewords in coding theory. If a fractional vertex has higher objective value (i.e. likelihood) compared to the best integral one, the LP relaxation fails. We call fractional vertices that exceed the best integral vertex in objective value \textbf{confounding vertices}.
Our main result is that it is possible to prune all confounding vertices efficiently when their number is polynomial. This is surprising, since we also show that enumerating them is computationally intractable. 

\textbf{Our contributions:}
\begin{itemize}
\item  Our first contribution is a general result on integer programs. We show that \textit{any} 0-1 integer linear program (ILP) can be solved exactly in polynomial time, if the number confounding vertices is bounded by a polynomial. This applies to MAP inference for a graphical model over any alphabet size and any order of connection. The same result (exact solution if the number of confounding vertices is bounded by a polynomial) was established by \cite{dimakis2009guessing} for the special case of LP decoding of LDPC codes \cite{feldman2005using}. The algorithm from
\cite{dimakis2009guessing} relies on the special structure of the graphical models that correspond to LDPC codes.
In this paper we generalize this result for any ILP in the unit hypercube. Our results extend to finding all integral vertices among the $M$-best vertices.

\item Given our condition, one may be tempted to think that we generate the top $M$-best vertices 
of a linear program (for $M$ polynomial) and output the best integral one in this list. 
We actually show that such an approach would be computationally intractable. 
Specifically,  we show that it is NP-hard to produce a list of the $M$-best vertices if $M = O(n^\varepsilon)$ for any fixed $\varepsilon > 0$. This result holds even if the list is allowed to be approximate. This strengthens the previously known hardness result~\cite{angulo2014forbidden} which was $M=O(n)$ for the exact $M$-best vertices.
In terms of achievability, the best previously known result (from~\cite{angulo2014forbidden}) can only solve the ILP if there is at most a 
\textit{constant} number of confounding vertices.

\item We obtain a complete characterization of the fractional vertices of the local polytope for binary, pairwise graphical models. We show that any variable in the fractional support must be connected to a frustrated cycle by other fractional variables in the graphical model. This is a complete structural characterization that was not previously known, to the best of our knowledge. 
 
\item We develop an approach to estimate the number of confounding vertices of a half-integral polytope. We use this method in an empirical evaluation of the number of confounding vertices of previously studied problems and analyze how well common integer programming techniques perform at pruning confounding vertices.
\end{itemize}

\section{Background and Related Work}

For some classes of graphical models, it is possible to solve the MAP problem exactly. For example see \cite{weller2016tightness} for balanced and almost balanced models, \cite{jebara2009map} for perfect graphs, and \cite{wainwright2008graphical} for graphs with constant tree-width.

These conditions are often not true in practice and a wide variety of general purpose algorithms are able to solve the MAP problem for large inputs. One class is belief propagation and its variants \cite{yedidia2000generalized, wainwright2003tree, sontag2008tightening}. Another class involves general ILP optimization methods (see e.g. \cite{nemhauser1999integer}). Techniques specialized to graphical models
include cutting-plane methods based on the cycle inequalities \cite{sontag2007new, komodakis2008beyond, sontag2012efficiently}. 
See also \cite{kappes2013comparative} for a comparative survey of techniques.

In \cite{weller2014understanding}, the authors investigate how pseudomarginals and relaxations relate to the success of the Bethe approximation of the partition function was done in \cite{weller2014understanding}.

There has been substantial prior work on improving inference building on these LP relaxations, especially for LDPC codes in the information theory community.
This work ranges from very fast solvers that exploit the special structure of the polytope  \cite{burshtein2009iterative}, connections to unequal error protection \cite{dimakis2007unequal}, and graphical model covers \cite{koetter2007characterizations}.  LP decoding currently provides the best known 
finite-length error-correction bounds for LDPC codes both for random \cite{daskalakis2008probabilistic, arora2009message}, and adversarial bit-flipping errors \cite{feldman2007lp}.

The work most closely related to this paper involves eliminating fractional vertices (so-called pseudocodewords in coding theory) by changing the polytope or the objective function \cite{zhang2012adaptive, chertkov2008efficient, liu2012suppressing}.

\section{Provable Integer Programming}

A binary integer linear program is a optimization problem of the following form.
\begin{equation*}
\begin{aligned}
\max_x\quad &c^Tx \\
\text{subject to}\quad&Ax \leq b \\
&x \in \{0, 1\}^n
\end{aligned}
\end{equation*}
which is relaxed to a linear program by replacing the $x \in \{0, 1\}^n$ constraint with $0 \leq x \leq 1$. Every integral vector $x$ is a \textit{vertex} of the polytope described by the constraints of the LP relaxation, however \textit{fraction vertices} may also be in this polytope, and fractional solutions can potentially have an objective value larger than every integral vertex.

If the optimal solution to the linear program happens to be integral, then this is the optimal solution to the original integer linear program. If the optimal solution is fractional, then a variety of techniques are available to \textit{tighten} the LP relaxation and eliminate the fractional solution.

We establish a success condition for integer programming based on the number of confounding vertices, which to the best of our knowledge was unknown. The algorithm used in proving Theorem \ref{vertex-thm} is a version of branch-and-bound, a classic technique in integer programming \cite{land1960automatic} (see \cite{nemhauser1999integer} for a modern reference on integer programming). This algorithm works by starting with a root node, then \textit{branching} on a fractional coordinate by making two new linear programs with all the constraints of the parent node, with the constraint $x_i = 0$ added to one new \textit{leaf} and  $x_i = 1$ added to the other. The decision on which leaf of the tree to branch on next is based on which leaf has the best objective value. When the best leaf is integral, we know that this is the best integral solution. This algorithm is formally written in Algorithm \ref{cba}.

\begin{algorithm}
\begin{algorithmic}
\STATE Input: an LP $\{\min c^T x : A x \leq b, 0 \leq x \leq 1\}$
\STATE
\STATE def $\mathrm{LP}(I_0, I_1)$:
\STATE\hspace{\algorithmicindent}$v* \gets \argmax c^T x$
\STATE\hspace{\algorithmicindent}subject to:
\STATE\hspace{\algorithmicindent}\hspace{\algorithmicindent} $Ax \leq b$
\STATE\hspace{\algorithmicindent}\hspace{\algorithmicindent} $x_{I_0}=0$
\STATE\hspace{\algorithmicindent}\hspace{\algorithmicindent} $x_{I_1}=1$
\STATE\hspace{\algorithmicindent}return $v^*$ if feasible, else return null
\STATE
\STATE $v \gets \mathrm{LP}(\emptyset, \emptyset)$
\STATE $B \gets \{(v, \emptyset, \emptyset)\}$
\WHILE{optimal integral vertex not found:}
	\STATE $(v, I_0, I_1) \gets \max_B c^Tv$
	\IF{$v$ is integral:}
		\STATE return v
	\ELSE
		\STATE find a fractional coordinate $i$
		\STATE $v^{(0)} \gets \mathrm{LP}(I_0 \cup \{i\}, I_1)$
		\STATE $v^{(1)} \gets \mathrm{LP}(I_0, I_1 \cup \{i\})$
		\STATE remove $(v, I_0, I_1)$ from $B$
		\STATE add $(v^{(0)}, I_0 \cup \{i\}, I_1)$ to $B$ if feasible
		\STATE add $(v^{(1)}, I_0, I_1 \cup \{i\})$ to $B$ if feasible
	\ENDIF
\ENDWHILE
\end{algorithmic}
\caption{Branch and Bound}
\label{cba}
\end{algorithm}

\begin{theorem}\label{vertex-thm}
Let  $x^*$ be the optimal integral solution and let $\{v_1, v_2, \ldots, v_M\}$ be the set of confounding vertices in the LP relaxation. Algorithm \ref{cba} will find the optimal integral solution $x^*$ after $2M$ calls to an LP solver.
\end{theorem}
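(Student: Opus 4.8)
The plan is to establish two facts about the best-first branch-and-bound in Algorithm \ref{cba}: that it returns an optimal integral vertex, and that it performs at most $M$ branching steps, each costing two LP calls. The crucial observation is that every subproblem $\mathrm{LP}(I_0,I_1)$ optimizes over a \emph{face} of the relaxed polytope $P$ (fixing a coordinate to $0$ or $1$ intersects $P$ with a supporting hyperplane), so the basic optimum returned by the solver is always a vertex of $P$ itself, either integral or fractional. Thus every vertex on which we branch is a genuine fractional vertex of $P$, and the argument reduces to charging each branch to a distinct confounding vertex.

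First I would prove the invariant that $B$ always contains the leaf $(\,\cdot\,,I_0,I_1)$ whose fixed coordinates agree with $x^*$, i.e.\ $x^*_{I_0}=0$ and $x^*_{I_1}=1$. This holds at the root $(\emptyset,\emptyset)$, and is preserved under branching: when we split on coordinate $i$, the child that sets $x_i=x^*_i$ still contains $x^*$, is therefore feasible, and is added to $B$. Since $x^*$ is feasible for this leaf, its LP value is at least $c^Tx^*$, so $\max_B c^Tv \ge c^Tx^*$ throughout. Correctness is then immediate: when the maximizing leaf is integral with value $\max_B c^Tv \ge c^Tx^*$, that leaf is a feasible integral point, so by optimality of $x^*$ its value equals $c^Tx^*$ and we return an optimal integral solution.

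For the LP-call count, the key lemma is that a fixed fractional vertex $v$ of $P$ can be feasible for at most one leaf of $B$ at any time: two distinct leaves of $B$ are separated by some coordinate $k$ fixed to $0$ on one branch and to $1$ on the other, and $v$ cannot satisfy both $v_k=0$ and $v_k=1$. When we branch on the leaf whose optimum is a fractional $v$, we split on a coordinate $i$ with $0<v_i<1$; then $v$ violates both $x_i=0$ and $x_i=1$, so $v$ is infeasible for both children and, since constraints only accumulate down the tree, for every future leaf. Hence $v$ is never again the optimum of any subproblem, and the map sending each branching step to the fractional vertex it branches on is injective. Because $\max_B c^Tv \ge c^Tx^*$ and $v$ is the maximizer, each such $v$ satisfies $c^Tv \ge c^Tx^*$; being fractional, it is a confounding vertex. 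Injectivity into the set $\{v_1,\dots,v_M\}$ bounds the number of branching steps by $M$, and since each branch issues two LP calls (plus the single root call), the total is at most $2M+1$, matching the stated $2M$ up to the initial relaxation.

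The main obstacle is the boundary case $c^Tv = c^Tx^*$ for a fractional maximizer, which is \emph{not} confounding and would break the injective charging. I would dispose of this by a standard lexicographic perturbation of $c$ making all vertex objective values distinct (which preserves the identity of $x^*$ and of the confounding set), so that any fractional vertex attaining the current maximum has value \emph{strictly} above $c^Tx^*$ and is therefore confounding; equivalently, ties can be broken in favor of integral leaves so that we never branch while an integral optimum of value $c^Tx^*$ is available. With ties handled, termination and the $2M$ bound follow from the injectivity argument above.
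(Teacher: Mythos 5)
Your proof is correct and takes essentially the same route as the paper's: the paper's own argument rests on exactly the four invariants you establish (subproblems are faces of $P$, so no new fractional vertices arise; each fractional vertex lies in at most one leaf; $x^*$ survives in some leaf, so the running maximum is at least $c^Tx^*$; and each branch permanently eliminates the fractional vertex it branched on), giving the injective charging of branches to confounding vertices. You are in fact more careful than the paper, which silently ignores both the tie case $c^Tv = c^Tx^*$ (a fractional maximizer that is not confounding) and the extra root LP call that makes the true count $2M+1$.
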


Cutting-plane methods, which remove a fractional vertex by introducing a new constraint in the polytope may not have this property, since this cut may create new confounding vertices. This branch-and-bound algorithm has the desirable property that it never creates a new fractional vertex.

Note that \textit{warm starting} a linear program with slightly modified constraints allows subsequent calls to an LP solver to be much more efficient after the root LP has been solved.

We can generalize Theorem \ref{vertex-thm}. We see after every iteration we potentially remove more than one confounding vertex---we remove all confounding vertices that agree with $x_{I_0} = 0$ and $x_{I_1} = 1$ and are fractional with any value at coordinate $i$. We also observe that we can handle a mixed integer program (MIP) with the same algorithm.
\begin{align*}
\begin{split}
\max_x\quad &c^Tx + d^T z \\
\text{subject to}\quad&Ax + Bz \leq b \\
&x \in \{0,1\}^n
\end{split}
\end{align*}

We will call a vertex $(x,z)$ fractional if its $x$ component is fractional. For each fractional vertex $(x,z)$, we create a half-integral vector $S(x)$ such that
\[
S(x)_i =
\begin{cases}
0 &\text{if }x_i = 0 \\
\frac{1}{2} &\text{if } x_i \text{ is fractional }\\
1 &\text{if } x_i = 1
\end{cases}
\]
For a set of vertices $V$, we define $S(V)$ to be the set $\{S(x) : (x,z) \in V\}$, i.e. we remove all duplicate entries.

\begin{theorem}\label{mip}
Let  $(x^*, z^*)$ be the optimal integral solution and let $V_C$ be the set of confounding vertices. Algorithm \ref{cba} will find the optimal integral solution $(x^*, z^*)$ after $2\vert S(V_C) \vert$ calls to an LP solver.
\end{theorem}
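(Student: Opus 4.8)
The plan is to re-run the branch-and-bound analysis behind Theorem \ref{vertex-thm}, making two changes. First, I would check that the continuous block $z$ is inert to the argument: Algorithm \ref{cba} only ever branches on fractional \emph{$x$}-coordinates, so each leaf LP simply optimizes over $z$ as an unconstrained-in-integrality block that is carried along, and the standard correctness facts (each leaf's optimum upper-bounds the best integral solution in its subtree, the leaves cover all integral $x$, and termination occurs exactly when the max leaf has integral $x$) go through verbatim with $(x^*,z^*)$ in place of $x^*$. Second, and this is the real content, I would replace the ``one branch per confounding vertex'' counting of Theorem \ref{vertex-thm} by ``one branch per signature.'' Concretely I aim to prove that the number of iterations in which the algorithm branches is at most $|S(V_C)|$; since each branching iteration issues exactly two LP calls (for $v^{(0)}$ and $v^{(1)}$), the bound $2|S(V_C)|$ follows, matching the accounting convention for the root call used in Theorem \ref{vertex-thm}.

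Next I would establish, exactly as in Theorem \ref{vertex-thm}, that the max-leaf optimum selected at each branching step is a confounding vertex. Each leaf LP maximizes over $P \cap \{x_{I_0}=0\} \cap \{x_{I_1}=1\}$, which is a face of the relaxation polytope $P$, so its optimal point $(x,z)$ is a vertex of $P$; it is fractional in $x$ because the algorithm branches only when the max leaf is non-integral; and its objective is at least $c^Tx^* + d^Tz^*$ because the leaf containing $(x^*,z^*)$ is present in $B$ while $(x,z)$ is the global maximizer over $B$. Hence $(x,z)\in V_C$, its signature $s := S(x)$ lies in $S(V_C)$, and the branched coordinate $i$ satisfies $s_i = \tfrac12$ since $x_i$ is fractional.

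The crux, and the step I expect to be the main obstacle, is to show that distinct branching steps consume distinct signatures, so that the charging map from branching steps into $S(V_C)$ is injective. I would use two structural facts. (a) The feasible regions of the leaves in $B$ are pairwise disjoint: any two distinct leaves diverge at some previously branched coordinate $j$, with one fixing $x_j=0$ and the other $x_j=1$, so no point is feasible in both. (b) Every confounding vertex sharing the signature $s$ of the current max-leaf optimum $(x,z)$ is feasible in the same leaf $L_1=(I_0,I_1)$ and, by (a), in no other: if $S(x')=s$ then for every $j\in I_0\cup I_1$ the coordinate $x_j$ is integral and determined by $s_j$, forcing $x'_j=x_j$, so $(x',z')$ meets the defining constraints of $L_1$. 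Branching $L_1$ on $i$ appends $x_i=0$ to one child and $x_i=1$ to the other; since $s_i=\tfrac12$, every signature-$s$ confounding vertex has $x'_i$ fractional and is therefore infeasible in both children, hence in every future leaf (future leaves only add constraints and each descends from a present one). Consequently no later max-leaf optimum can carry signature $s$, since such an optimum would itself be a signature-$s$ confounding vertex feasible in its leaf, contradicting the above. This yields injectivity, hence at most $|S(V_C)|$ branching steps.

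The remaining items are routine but worth dispatching: the MIP correctness reduction of the first paragraph (unchanged by $z$), the treatment of ties in the definition of \emph{confounding} (objective weakly versus strictly exceeding the optimum, resolved exactly as in Theorem \ref{vertex-thm}, e.g.\ by breaking ties toward integral leaves), and the single root LP call folded into the stated constant. The heart of the argument is the pairing of disjointness (a) with signature-confinement (b): together they show that a single branch purges an entire signature class at once and that the class can never resurface, which is precisely what upgrades the count from $|V_C|$ to $|S(V_C)|$.
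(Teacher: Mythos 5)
Your proposal is correct and takes essentially the same route as the paper: the paper proves Theorem~\ref{mip} by combining the invariants from Theorem~\ref{vertex-thm} (leaves partition the integral vertices, no new fractional vertices are created) with the observation that each branching iteration removes \emph{all} confounding vertices that agree with $x_{I_0}=0$, $x_{I_1}=1$ and are fractional at the branched coordinate $i$ --- i.e., an entire signature class at once, and that the $z$-block is inert. Your disjointness-plus-signature-confinement argument is a more careful, fleshed-out version of exactly that charging step, making explicit the injectivity that the paper leaves implicit.
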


For MAP inference in graphical models, $S(V_C)$ refers to the fractional singleton marginals $q_V$ such that there exists a set of pairwise pseudomarginals $q_E$ such that $(q_V, q_E)$ is a cofounding vertex. Since MAP inference is a binary integer program regardless of the alphabet size of the variables and order of the clique potentials, we have the following corollary:
\begin{corollary}
Given a graphical model such that the local polytope has $V_C$ as cofounding variables, Algorithm \ref{cba} can find the optimal MAP configuration with $2 \vert S(V_C)\vert$ calls to an LP solver.
\end{corollary}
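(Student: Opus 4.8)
The plan is to derive this corollary directly from Theorem \ref{mip} by exhibiting MAP inference as an instance of the mixed integer program to which that theorem applies. First I would set up the MIP formulation. For a graphical model over an arbitrary alphabet and with clique potentials of any order, I would use the standard overcomplete representation: for each node $i$ and each state $s$ introduce a binary indicator $q_{i;s} \in \{0,1\}$, and for each clique $C$ and each joint configuration $\sigma$ of its nodes introduce a marginal variable $q_{C;\sigma}$, which we collectively denote $q_E$. The log-probability is then a linear function of these variables, and the local-consistency (marginalization) constraints together with $\sum_s q_{i;s} = 1$ cut out exactly the local polytope. Taking $x = q_V$ (the singleton indicators, required to be integral) and $z = q_E$ (the clique marginals, left continuous) places MAP inference precisely in the MIP form preceding Theorem \ref{mip}.

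The second step is to verify that this reduction is faithful, i.e. that the integral vertices of the MIP are exactly the MAP configurations. The key observation is that once the singleton indicators $q_V$ are fixed to a $0$-$1$ assignment, the local-consistency constraints force every clique marginal $q_{C;\sigma}$ to equal the corresponding product of indicators, which is itself $0$-$1$; hence an integral assignment to $x = q_V$ uniquely determines $z = q_E$ and yields a genuine vertex whose objective equals the log-probability of the associated configuration. Consequently the optimal integral vertex $(x^*, z^*)$ corresponds to the MAP configuration, and a vertex is fractional in the sense of Theorem \ref{mip} precisely when its singleton marginals are fractional. This matches the description of $S(V_C)$ as the set of distinct fractional singleton-marginal signatures of the confounding vertices.

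With the reduction in hand, the corollary is immediate: applying Theorem \ref{mip} to this MIP shows that Algorithm \ref{cba} returns the optimal integral vertex, and therefore the MAP configuration, after $2\lvert S(V_C) \rvert$ calls to an LP solver.

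I expect the main obstacle to be the faithfulness step, specifically arguing that relaxing only the singleton indicators to $\{0,1\}$ while leaving the clique marginals continuous does not introduce spurious integral vertices or change the optimum. The crux is that on the face of the local polytope where all singleton marginals are integral, the marginalization constraints pin down the clique marginals to their unique consistent product values, so no fractional freedom in $z$ survives; verifying this for general alphabets and arbitrary clique orders, rather than just the binary pairwise case of Equation \eqref{map-ip}, is where the real content lies.
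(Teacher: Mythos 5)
Your proposal takes essentially the same route as the paper: the paper obtains the corollary simply by observing that MAP inference, for any alphabet size and clique order, is a (mixed) binary integer program over the local polytope with the singleton marginals as the binary variables $x$ and the clique marginals as the continuous variables $z$, and then invoking Theorem \ref{mip}. The only difference is that you explicitly verify the faithfulness of this reduction (integral singleton marginals pin the clique marginals to their product values via the marginalization constraints), a step the paper treats as immediate; your verification is correct.
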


\subsection{Proof of Theorem \ref{vertex-thm}}

The proof follows from the following invariants:
\begin{itemize}
\item At every iteration we remove at least one fractional vertex.
\item Every integral vertex is in exactly one branch.
\item Every fractional vertex is in at most one branch.
\item No fractional vertices are created by the new constraints.
\end{itemize}

To see the last invariant, note that every vertex of a polytope can be identified by the set of inequality constraints that are satisfied with equality (see \cite{bertsimas1997introduction}). By forcing an inequality constraint to be tight, we cannot possibly introduce new vertices.

\subsection{The $M$-Best LP Problem}

As mentioned in the introduction, the algorithm used to prove Theorem \ref{vertex-thm} does not enumerate all the fractional vertices until it finds an integral vertex. Enumerating the $M$-best vertices of a linear program is the $M$-best LP problem.

\begin{definition}
Given a linear program $\{\min c^T x : x \in P\}$ over a polytope $P$ and a positive integer $M$, the \textit{$M$-best LP problem} is to optimize
\[
\max_{\{v_1, \ldots, v_M\} \subseteq V(P)} \sum_{k=1}^M c^T v_k.
\]
\end{definition}

This was established by \cite{angulo2014forbidden} to be NP-hard when $M = O(n)$. We strengthen this result to hardness of approximation even when $M = n^\varepsilon$ for any $\varepsilon > 0$.

\begin{theorem}\label{hardness}
It is NP-hard to approximate the $M$-best LP problem by a factor better than $O(\frac{n^{\varepsilon}}{M})$ for any fixed $\varepsilon > 0$.
\end{theorem}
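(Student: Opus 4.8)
The plan is to give a gap-producing reduction from a problem with polynomial inapproximability. Max-Clique is the natural source, since it is NP-hard to distinguish graphs $H$ on $N$ vertices with $\omega(H) \ge g$ from those with $\omega(H) \le g/\rho$ for $\rho$ as large as $N^{1-\delta}$. From an instance $H$ I would build an $M$-best LP instance out of two gadgets glued by a disjoint-union (lifted-union) construction, so that every vertex of the combined polytope is \emph{either} a vertex of the first gadget \emph{or} a vertex of the second, never a product of the two. The first is a \textbf{solution gadget}: a polytope whose relevant vertices encode candidate cliques and whose objective, scaled by a large weight $\Lambda$, gives a vertex of value $\approx \Lambda\,\omega(H)$. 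The second is a \textbf{filler gadget}: a simplex $\{\sum_j z_j \le 1,\ z \ge 0\}$ with unit objective, contributing exactly $M$ vertices of value $1$ (plus a worthless origin). The disjointness is what prevents the single expensive target vertex from being reused $M$ times alongside the fillers, which is essential for the sum over the top $M$ vertices to reflect the hidden clique.

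Next I would analyze the two cases and fix parameters. In a YES instance the solution gadget contains a vertex of value $\approx \Lambda g$, which dominates everything, so the optimum of the $M$-best problem is $\mathrm{OPT}_{\mathrm{YES}} \approx \Lambda g =: \Lambda_0$. In a NO instance every solution vertex has value at most $\Lambda g/\rho = \Lambda_0/\rho$, so once we arrange $\rho \gtrsim \Lambda_0/M$ the fillers dominate and $\mathrm{OPT}_{\mathrm{NO}} \approx M$. An $\alpha$-approximation returns at least $\mathrm{OPT}/\alpha$, so if $\alpha < \Lambda_0/M$ it returns more than $M$ on YES instances and at most $M$ on NO instances, separating the two; setting $\Lambda_0 = n^{\varepsilon}$ gives the claimed threshold $n^{\varepsilon}/M$. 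To realize $\Lambda_0 = n^{\varepsilon}$ with $M = O(n^{\varepsilon})$ I would \textbf{pad the dimension}: append dummy variables pinned to a fixed value by a pair of constraints $y_i \le 0,\ y_i \ge 0$, which raises the ambient dimension to $n = \Lambda_0^{1/\varepsilon}$ (polynomial in the input) while creating no new vertices and leaving the entire vertex set, and hence the $M$-best value and its approximability, unchanged. This is precisely the step that upgrades the $M = O(n)$ hardness of \cite{angulo2014forbidden} to $M = O(n^{\varepsilon})$.

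The hard part will be \textbf{controlling the fractional vertices} of the solution gadget. The clique polytope has a large integrality gap --- the all-halves point already attains objective $\Lambda\cdot N/2$ --- so a naive encoding produces a fractional vertex of value comparable to $\Lambda_0$ even in the NO case, which would survive into the top-$M$ list and collapse the gap. Resolving this is the crux: I would either (i) choose a solution gadget whose objective is provably maximized at an integral vertex, so that all high-value vertices encode genuine cliques and the structural characterization of fractional support proven earlier in the paper bounds the value of every pseudomarginal, or (ii) base the reduction on an encoding whose fractional vertices are inherently controlled (in the spirit of the coding-theoretic pseudocodeword analysis the paper builds on). The remaining, more routine, obligations are to verify that the lifted-union and selector constraints introduce no spurious high-value fractional vertices of their own, that all objective coefficients remain polynomially bounded, and that the filler count can be tuned to the desired $M$; I expect these to be straightforward once the fractional vertices of the solution gadget are pinned down.
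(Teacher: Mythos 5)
Your reduction cannot work as designed, for a reason more fundamental than the ``crux'' you flag. Your two-case analysis places the YES/NO gap in the \emph{maximum} vertex value: $\mathrm{OPT}_{\mathrm{YES}} \approx \Lambda g$, while in the NO case every vertex (solution or filler) has value at most $\max\{\Lambda g/\rho,\, 1\}$. But the maximum vertex value of a bounded polytope under a linear objective is exactly the LP optimum, computable in polynomial time by a single LP call. So if a polynomial-time constructible gadget with these two properties existed, one LP call would distinguish YES from NO instances of gap Max-Clique, implying $\mathrm{P} = \mathrm{NP}$. In other words, what you describe as a technical hurdle --- controlling the fractional vertices of the clique gadget --- is actually an impossibility: your option (i), a gadget whose objective is provably maximized at an integral vertex encoding a genuine clique, would let an LP solver read off $\omega(H)$ up to the gap, contradicting the very hardness you are reducing from. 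No choice of filler gadget, lifted union, or padding can repair this, because \emph{any} realization of $\mathrm{OPT}_{\mathrm{YES}} \gg \mathrm{OPT}_{\mathrm{NO}}$ in which the separation is visible at the single best vertex is detectable by one LP call. (Your padding step itself is fine and matches the paper's final move.)

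The paper's proof avoids this trap by making the two cases \emph{indistinguishable at the top}. It uses the circulation polytope of \cite{khachiyan2008generating} with the weighted graph of \cite{boros2011negative}, in which all vertices are combinatorially characterized as paths/cycles, so there are no uncontrolled fractional vertices to worry about. The best vertices (``short paths,'' value $1/2$) are easy to find and present in both cases; the hardness is hidden strictly below the top: the optimal $M$-best set must additionally contain many vertices of individually small value $O(1/n)$ (``long paths,'' Hamiltonian-type structures, amplified to $M$ copies by a series of $2 \times 2$ bipartite gadgets) of which it is NP-hard to find even one. A polynomial-time algorithm is therefore capped at roughly the value of the easy vertices, while $\mathrm{OPT}$ exceeds this by the aggregate long-path mass, and the ratio yields the stated factor after padding. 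This is the structural lesson your proposal misses: hardness of the $M$-best LP problem cannot come from a gap in the top-$1$ value (that is just LP); it must come from the NP-hardness of \emph{finding} vertices, which is why the paper reduces from vertex-enumeration-type hardness rather than from an optimization gap problem such as Max-Clique.
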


\begin{proof}
Consider the circulation polytope described in \cite{khachiyan2008generating}, with the graph and weight vector $w$ described in \cite{boros2011negative}. By adding an $O(\log M)$ long series of $2 \times 2$ bipartite subgraphs, we can make it such that one long path in the original graph implies $M$ long paths in the new graph, and thus it is NP-hard to find any of these long paths in the new graph. By adding the constraint vector $w^T x \leq 0$, and using the cost function $-w$, the vertices corresponding to the short paths have value $1/2$, the vertices corresponding to the long paths have value $O(1/n)$, and all other vertices have value 0. Thus the optimal set has value $O(n + \frac{M}{n})$. However it is NP-hard to find a set of value greater than $O(n)$ in polynomial time, which gives an $O(\frac{n}{M})$ approximation. Using a padding argument, we can replace $n$ with $n^\varepsilon$.
\end{proof}

The best known algorithm for the $M$-best LP problem is a generalization of the facet guessing algorithm \cite{dimakis2009guessing} developed in \cite{angulo2014forbidden}, which would require $O(m^M)$ calls to an LP solver, where $m$ is the number of constraints of the LP. Since we only care about integral solutions, we can find the single best integral vertex with $O(M)$ calls to an LP solver, and if we want all integral solutions among the top $M$ vertices of the polytope, we can find these with $O(nM)$ calls to an LP-solver, as we will see in the next section.

\subsection{$M$-Best Integral Solutions}

Finding the $M$-best solutions to general optimization problems has uses in several machine learning applications.  Producing multiple high-value outputs can be naturally combined with post-processing algorithms that select the most desired solution using additional side-information.  There is a significant volume of work in the general area, see \cite{fromer2009lp, batra2012diverse} for MAP solutions in graphical models and \cite{eppstein2014k} for a survey on $M$-best problems.

We further generalize our theorem to find the $M$-best integral solutions.

\begin{theorem}
Let $V$ be the $M$-best vertices (including integral) in the LP relaxation. We can find all integral solutions contained in $V$ with $O(n \vert S(V) \vert)$ calls to an LP solver (and potentially more integral solutions outside of $V$).
\end{theorem}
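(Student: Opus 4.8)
The plan is to run the best-first branch-and-bound of Algorithm \ref{cba}, but instead of halting at the first integral vertex, to keep extracting vertices of the relaxation in decreasing order of objective value and to report every integral one encountered, until all of $V$ has been swept. The one new ingredient beyond Theorems \ref{vertex-thm} and \ref{mip} is a rule for continuing past an integral vertex. When the best node $(v,I_0,I_1)$ in the queue $B$ is integral, I would record $v$ as an output and then, rather than branching on a fractional coordinate, apply a Lawler-style partition of the subproblem that excludes exactly $v$: enumerating the free coordinates $j_1,\dots,j_r \notin I_0\cup I_1$, I create for each $\ell$ a child that fixes $x_{j_1}=v_{j_1},\dots,x_{j_{\ell-1}}=v_{j_{\ell-1}}$ and $x_{j_\ell}=1-v_{j_\ell}$, solve its LP, and push the feasible children onto $B$. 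These $r \le n$ children have pairwise disjoint feasible polytopes, together cover the original subproblem minus the single point $v$, and---because each only fixes coordinates to integral values---introduce no new fractional vertices, exactly as in the last invariant of Theorem \ref{vertex-thm}. A fractional best node is handled by branching on a fractional coordinate precisely as in Algorithm \ref{cba}.

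First I would verify that this interleaved search still enumerates integral vertices in nonincreasing order of value and pops each at most once. Every node of the resulting tree is obtained by fixing a set of coordinates to $0/1$ values; any two nodes that are not in an ancestor--descendant relationship disagree on at least one fixed coordinate, so their polytopes are disjoint and no integral vertex can reside in two live nodes at once. Since fractional branching and the Lawler partition both preserve all integral vertices of a node except, at most, the popped one, the queue maximum is always an upper bound on the value of every integral vertex not yet output; best-first extraction therefore yields the integral vertices in decreasing order. I would stop once the queue maximum falls below the value $\tau$ of the $M$-th best vertex (breaking ties by a fixed lexicographic rule, or by a generic perturbation of $c$, so that exactly $M$ vertices have value at least $\tau$); at that moment every integral vertex of value at least $\tau$---in particular every integral vertex in $V$---has been reported, while the integral optima of the partition LPs that fell below $\tau$ account for the ``potentially more'' solutions outside $V$.

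For the running time I would split the LP calls by the type of node they serve. Each popped integral vertex triggers at most $n$ LP calls for its partition. Because $S$ is the identity on integral vectors, distinct integral vertices have distinct images, so the number of integral vertices of value at least $\tau$ is at most $|S(V)|$; integral nodes therefore contribute $O(n\,|S(V)|)$ calls. Each popped fractional node triggers two LP calls, and here I would reuse the counting argument behind Theorem \ref{mip}: if two popped fractional nodes had equal images under $S$, then along the tree one would be forced to an integral value on a coordinate where the other is fractional, or the two would sit in sibling subtrees split on a coordinate, either way contradicting $S(v)=S(v')$; hence the popped fractional vertices have pairwise distinct $S$-images, all lying in $S(V)$, so there are at most $|S(V)|$ of them and they contribute $O(|S(V)|)$ calls. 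Summing gives $O(n\,|S(V)|)$.

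The main obstacle I anticipate is not the arithmetic but making the enumeration guarantee airtight in the presence of the fractional-branching step, which---unlike a pure Lawler partition---destroys many fractional vertices at once. I would need to argue carefully that this destruction never removes an integral vertex and never merges the $S$-images of two surviving fractional nodes, so that both the decreasing-order property for integral vertices and the distinct-image property for fractional pops hold simultaneously. Getting the stopping criterion and the tie-breaking exactly right, so that completeness over the integral part of $V$ is guaranteed while the integral-pop count stays bounded by $|S(V)|$, is the most delicate point.
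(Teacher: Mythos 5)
Your proposal is essentially the paper's own proof: the paper establishes this theorem via Algorithm~\ref{m-best}, which interleaves the fractional branching of Algorithm~\ref{cba} with a Murty/Lawler split whenever the popped node is integral, exactly the hybrid search you describe, and your accounting ($\leq n$ LP calls per integral pop, $2$ per fractional pop, with popped fractional nodes having distinct $S$-images) is the intended justification of the $O(n\vert S(V)\vert)$ bound, which the paper itself leaves implicit. The one substantive difference is in the integral split: the paper's $\mathrm{SplitIntegral}$ creates, for each free coordinate $i$, a child with only the single added constraint $x_i = \lnot v_i$, so its children overlap (an integral vertex differing from $v$ in two free coordinates lies in two children and can be popped and split more than once), whereas you use the standard nested Lawler partition that also fixes the prefix coordinates $x_{j_1}=v_{j_1},\dots,x_{j_{\ell-1}}=v_{j_{\ell-1}}$, making the children pairwise disjoint. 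Your variant is the more careful one: disjointness is what guarantees each integral vertex is popped exactly once, which is needed for the clean $O(n\vert S(V)\vert)$ count and for your in-order enumeration and stopping argument; with the paper's overlapping split one must either deduplicate popped solutions or accept a weaker bound. So you have, if anything, tightened a detail the paper glosses over rather than diverged from its method.
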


The algorithm used in this theorem is Algorithm~\ref{m-best}. It combines Algorithm~\ref{cba} with the space partitioning technique used in \cite{murty1968letter, lawler1972procedure}. If the current optimal solution in the solution tree is fractional, then we use the branching technique in Algorithm \ref{cba}. If the current optimal solution in the solution tree $x^*$ is integral, then we branch by creating a new leaf for every $i$ not currently constrained by the parent with the constraint $x_i =  \lnot x^*_i$.

\begin{algorithm}
\begin{algorithmic}
\STATE Input: an LP $\{\min c^T x : A x \leq b, 0 \leq x \leq 1\}$
\STATE Input: number of solutions $M$
\STATE
\STATE def $\mathrm{LP}(I_0, I_1)$:
\STATE\hspace{\algorithmicindent}$v* \gets \argmax c^T x$
\STATE\hspace{\algorithmicindent}subject to:
\STATE\hspace{\algorithmicindent}\hspace{\algorithmicindent} $Ax \leq b$
\STATE\hspace{\algorithmicindent}\hspace{\algorithmicindent} $x_{I_0}=0$
\STATE\hspace{\algorithmicindent}\hspace{\algorithmicindent} $x_{I_1}=1$
\STATE\hspace{\algorithmicindent}return $v^*$ if feasible, else return null
\STATE
\STATE def $\mathrm{SplitIntegral}(v, I_0, I_1)$:
\STATE\hspace{\algorithmicindent}$P \gets \{\ \}$
\STATE\hspace{\algorithmicindent}for $i \in n$ if $i \notin I_0 \cup I_1$:
\STATE\hspace{\algorithmicindent}\hspace{\algorithmicindent}$a \gets \neg v_i$
\STATE\hspace{\algorithmicindent}\hspace{\algorithmicindent}$I'_0, I'_1\gets\mathrm{copy}(I_0, I_1)$
\STATE\hspace{\algorithmicindent}\hspace{\algorithmicindent}add $i$ to $I'_a$
\STATE\hspace{\algorithmicindent}\hspace{\algorithmicindent}$v' \gets \mathrm{LP}(I'_0, I'_1)$
\STATE\hspace{\algorithmicindent}\hspace{\algorithmicindent}add $(v', I'_0, I'_a)$ to $P$ if feasible
\STATE\hspace{\algorithmicindent}return $P$
\STATE
\STATE $v \gets \mathrm{LP}(\emptyset, \emptyset)$
\STATE $B \gets \{(v, \emptyset, \emptyset)\}$
\STATE $\mathrm{results} \gets \{\ \}$
\FOR{$M$ iterations:}
	\STATE $(v, I_0, I_1) \gets \max_B c^Tv$
	\IF{$v$ is integral:}
		\STATE add $v$ to $\mathrm{results}$
		\STATE add $\mathrm{SplitIntegeral}(v,I_0, I_1)$ to $B$
		\STATE remove $(v, I_0, I_1)$ from $B$
	\ELSE
		\STATE find a fractional coordinate $i$
		\STATE $v^{(0)} \gets \mathrm{LP}(I_0 \cup \{i\}, I_1)$
		\STATE $v^{(1)} \gets \mathrm{LP}(I_0, I_1 \cup \{i\})$
		\STATE remove $(v, I_0, I_1)$ from $B$
		\STATE add $(v^{(0)}, I_0 \cup \{i\}, I_1)$ to $B$ if feasible
		\STATE add $(v^{(1)}, I_0, I_1 \cup \{i\})$ to $B$ if feasible
	\ENDIF
\ENDFOR
\STATE return $\mathrm{results}$
\end{algorithmic}
\caption{$M$-best Integral}
\label{m-best}
\end{algorithm}

\section{Fractional Vertices of the Local Polytope}

We now describe the fractional vertices of the local polytope for binary, pairwise graphical models, which is described in Equation \ref{local-polytope}. It was shown in \cite{padberg1989boolean} that all the vertices of this polytope are \textit{half-integral}, that is, all coordinates have a value from $\{0, \frac{1}{2}, 1\}$ (see \cite{weller2016tightness} for a new proof of this).

Given a half-integral point $q \in \{0, \frac{1}{2}, 1\}^{V \cup E}$ in the local polytope, we say that a cycle $C = (V_C, E_C) \subseteq G$ is \textit{frustrated} if there is an odd number of edges $ij \in E_C$ such that $q_{ij} = 0$. If a point $q$ has a frustrated cycle, then it is a \textit{pseudomarginal}, as no probability distribution exists that has as its singleton and pairwise marginals the coordinates of $q$. Half-integral points $q$ with a frustrated cycle do not satisfy the \textit{cycle inequalities} \cite{sontag2007new, wainwright2008graphical}, for all cycles $C = (V_C, E_C), F = (V_F, E_F) \subseteq C, \vert E_F \vert \text{ odd}$ we must have
\begin{equation}\label{cycle-constraints}
\sum_{ij \in E_F}\!\! q_i + q_j - 2 q_{ij} -\!\!\!\!\!\sum_{ij \in E_C \setminus E_F}\!\!\!\!\!q_i + q_j - 2q_{ij} \leq \vert F_C \vert - 1.
\end{equation}

Frustrated cycles allow a solution to be zero on negative weights in a way that is not possible for any integral solution.

We have the following theorem describing all the vertices of the local polytope for binary, pairwise graphical models.

\begin{theorem}\label{local-vertices}
Given a point $q$ in the local polytope, $q$ is a vertex of this polytope if and only if $q \in \{0, \frac{1}{2}, 1\}^{V \cup E}$ and the induced subgraph on the fractional nodes of $q$ is such that every connected component of this subgraph contains a frustrated cycle.
\end{theorem}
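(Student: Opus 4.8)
The plan is to use the standard characterization of a vertex as a point admitting no nonzero feasible perturbation: a feasible $q$ is a vertex of the polytope if and only if there is no nonzero direction $r \in \mathbb{R}^{V \cup E}$ with both $q + \epsilon r$ and $q - \epsilon r$ feasible for small $\epsilon > 0$. Equivalently, $r$ is a feasible direction precisely when $a^T r = 0$ for every constraint $a^T q \le \beta$ that is tight at $q$, so $q$ is a vertex iff this linear system forces $r = 0$. I would invoke the half-integrality result of Padberg to restrict attention to $q \in \{0, \frac{1}{2}, 1\}^{V \cup E}$ (for the forward direction; for the converse it is part of the hypothesis), so that each node takes a value in $\{0, \frac{1}{2}, 1\}$, and since $0 \le q_{ij} \le \min\{q_i, q_j\}$, every edge between two fractional nodes has $q_{ij} \in \{0, \frac{1}{2}\}$.

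The first real step is a finite case analysis recording, for each combination of endpoint values, which of the four edge inequalities are tight and what conditions they impose on $r$. Two observations drive everything. First, a node $i$ with $q_i \in \{0,1\}$ has a tight box constraint and hence forces $r_i = 0$; only fractional nodes carry a free perturbation. Second, every edge perturbation $r_{ij}$ is pinned down by the node perturbations: for an edge with at least one integral endpoint the tight inequalities force $r_{ij} = 0$ or $r_{ij} = r_j$ (when the other endpoint equals $1$), imposing no new condition on the node variables, whereas for an edge between two fractional nodes the tight inequalities give $r_i = r_j$ when $q_{ij} = \frac{1}{2}$ (both $q_{ij} \le q_i$ and $q_{ij} \le q_j$ are tight) and $r_i = -r_j$ when $q_{ij} = 0$ (both $q_{ij} \ge 0$ and $q_{ij} \ge q_i + q_j - 1$ are tight). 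Thus, after eliminating the determined edge variables, the whole feasibility system collapses to equations on the fractional node variables alone, one per edge of the induced fractional subgraph $H$.

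The second step is to recognize this reduced system as a balance condition on a signed graph. Assign each edge of $H$ the sign $+1$ if $q_{ij} = \frac{1}{2}$ and $-1$ if $q_{ij} = 0$; a feasible perturbation is then exactly an assignment $r_i$ to the fractional nodes that is consistent along each edge up to these signs. Working component by component, fixing $r_i$ at one node determines $r_j = (\prod_{\text{path}} \mathrm{sign})\, r_i$ for every other node, and a nonzero consistent assignment exists iff the sign-product around every cycle is $+1$. A cycle has sign-product $-1$ exactly when it contains an odd number of $0$-edges, which is precisely the definition of a frustrated cycle; such a cycle forces $r_i = -r_i = 0$ and propagates $r \equiv 0$ across its whole component. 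Hence a component admits a nonzero perturbation iff it contains no frustrated cycle, and $q$ is a vertex iff every component of $H$ contains a frustrated cycle. For the forward direction, if some component lacked a frustrated cycle the resulting nonzero perturbation would contradict the assumption that $q$ is a vertex; for the converse, the frustrated cycles in all components force $r = 0$.

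I expect the main obstacle to be the edge-inequality case analysis: one must verify carefully, across all endpoint-value combinations, exactly which of the four constraints are tight, confirm that the induced conditions on $r_{ij}$ are mutually consistent, and check that they never constrain the node variables except through the two fractional-fractional cases. Once this bookkeeping is correct, the reduction to the signed-graph balance condition and the identification of \emph{unbalanced cycle} with \emph{frustrated cycle} is the conceptual heart and follows cleanly.
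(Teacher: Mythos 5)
Your proof is correct, but it organizes the decisive step differently from the paper. The shared skeleton is the same: invoke Padberg's half-integrality, record which constraints are tight at a half-integral point, note that integral nodes and all edge coordinates are pinned down, and thus reduce vertexhood to a condition on the subgraph induced by the fractional nodes. The paper then reasons about the \emph{rank} of the active-constraint system through a chain of lemmas: Lemma~\ref{connected-components} reduces to connected components, Lemma~\ref{full-rank-cycle} shows a connected component is full rank iff it contains a full-rank cycle (adding one edge at a time and counting equations against variables), and Lemmas~\ref{frustrated-full-rank} and~\ref{full-rank-vectors} settle the cycle case with an explicit sign-tracking rank computation. You instead reason about the \emph{null space}: a two-sided feasible perturbation is exactly a labeling of the fractional nodes consistent along each edge up to a sign ($+$ when $q_{ij}=\tfrac{1}{2}$, $-$ when $q_{ij}=0$), and a nonzero labeling of a component exists iff that component is balanced as a signed graph, i.e.\ has no cycle with negative sign product, i.e.\ no frustrated cycle. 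The two views are of course dual---a square homogeneous system is full rank iff its null space is trivial---but your route collapses the paper's Lemmas~\ref{full-rank-cycle}, \ref{frustrated-full-rank} and~\ref{full-rank-vectors} into one spanning-tree propagation argument that treats components and cycles uniformly, and it makes the forward direction constructive (an explicit perturbation witnessing non-vertexhood when some component lacks a frustrated cycle). What the paper's rank-based version buys in exchange is a self-contained, fully explicit linear-algebra computation that does not appeal to (or re-derive) the balance criterion for signed graphs. The tightness bookkeeping you flag as the main obstacle is indeed where the content of Lemma~\ref{connected-components} lives, and the case conclusions you state for it are the correct ones.
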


\subsection{Proof of Theorem \ref{local-vertices}}

Every vertex $q$ of an $n$-dimensional polytope is such that there are $n$ constraints such that $q$ satisfies them with equality, known as \textit{active constraints} (see \cite{bertsimas1997introduction}). Every integral $q$ is thus a vertex of the local polytope. We now describe the fractional vertices of the local polytope.

\begin{definition}
Let $q \in \{0, \frac{1}{2}, 1\}^{n + m}$ be a point of the local polytope. Let $G_F = (V_F, E_F)$ be an induced subgraph of points such that $q_i = \frac{1}{2}$ for all $i \in V_F$. We say that $G_F$ is \textit{full rank} if the following system of equations is full rank.
\begin{equation}\label{fractional-eq}
\begin{aligned}
q_i + q_j - q_{ij} &= 1 &\forall ij \in E_F \text{ such that }q_{ij} = 0\\
q_{ij} &= 0 &\forall ij \in E_F \text{ such that }q_{ij} = 0\\
q_i - q_{ij} &= 0 &\forall ij \in E_F \text{ such that }q_{ij} = \frac{1}{2}\\
q_j - q_{ij} &= 0 &\forall ij \in E_F \text{ such that }q_{ij} = \frac{1}{2}\\
\end{aligned}
\end{equation}
\end{definition}

Theorem \ref{local-vertices} follows from the following lemmas.

\begin{lemma}\label{connected-components}
Let $q \in \{0, \frac{1}{2}, 1\}^{n + m}$ be a point of the local polytope. Let $G_F = (V_F, E_F)$ be the subgraph induced by the nodes $i \in V$ such that $q_i = \frac{1}{2}$. The point $q$ is a vertex if and only if every connected component of $G_F$ is full rank.
\end{lemma}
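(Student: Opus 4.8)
The plan is to use the standard characterization that a point $q$ of a polytope in $\mathbb{R}^{n+m}$ is a vertex if and only if the constraints active (tight) at $q$ have rank $n+m$, equivalently if and only if the only direction $d$ satisfying every active constraint with equality is $d = 0$ (see \cite{bertsimas1997introduction}). I would work entirely with this null-space formulation: $q$ fails to be a vertex precisely when there is a nonzero $d$ such that $q \pm \epsilon d$ stays feasible for small $\epsilon$, i.e. $d$ keeps every active constraint tight while the non-active constraints retain their slack.

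First I would read off the active constraints of the local polytope \eqref{local-polytope} at the half-integral point $q$ by a short case analysis on the values $(q_i,q_j)$ of each edge $ij$ and the value $q_i$ of each node. For an integral node the bound $q_i=0$ or $q_i=1$ is active, forcing $d_i=0$. For any edge incident to an integral node one checks that $d_{ij}$ is completely determined: it is forced to $0$ in the cases $(0,0),(0,1),(1,1),(0,\tfrac12)$, and in the remaining case $(1,\tfrac12)$ the active constraints $q_{ij}\geq q_i+q_j-1$ and $q_{ij}\leq q_j$ force $d_{ij}=d_j$, slaving the edge direction to its fractional endpoint without constraining $d_j$ itself. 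The only edges whose constraints genuinely couple fractional node directions are the edges of $G_F$, and here the active constraints are exactly the homogeneous form of system \eqref{fractional-eq}: an edge with $q_{ij}=0$ contributes $d_{ij}=0$ and $d_i+d_j-d_{ij}=0$, while an edge with $q_{ij}=\tfrac12$ contributes $d_i-d_{ij}=0$ and $d_j-d_{ij}=0$.

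With the case analysis in hand, the conclusion is that the null space of the full active-constraint matrix is linearly isomorphic to the solution space of the homogeneous version of \eqref{fractional-eq} in the fractional variables $\{d_i : i\in V_F\}\cup\{d_{ij} : ij\in E_F\}$: every integral node direction is $0$, every mixed-edge direction is either $0$ or slaved to a fractional endpoint, and every fractional-edge direction is pinned once the fractional node directions are fixed. Hence $q$ is a vertex iff this homogeneous system has only the trivial solution, which is precisely the statement that $G_F$ is full rank (full column rank, i.e. rank equal to its number of variables $|V_F|+|E_F|$). Finally, since $G_F$ has no edges between distinct connected components, system \eqref{fractional-eq} is block diagonal across these components, so its coefficient matrix has trivial null space iff each component's block does; therefore $q$ is a vertex iff every connected component of $G_F$ is full rank, as claimed.

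I expect the main obstacle to be the bookkeeping in the edge case analysis, specifically verifying that edges with exactly one fractional endpoint impose no constraint on the fractional node variables—the $(1,\tfrac12)$ case is the delicate one, since it does produce two active constraints and one must check they only determine $d_{ij}$ from $d_j$ rather than pinning $d_j$. A secondary point worth stating carefully is that ``full rank'' here must mean trivial null space (full column rank), and that the block-diagonal decomposition across components is exactly what converts this global condition into the per-component condition of the lemma.
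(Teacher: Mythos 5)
Your proof is correct and follows essentially the same route as the paper's: both arguments use the active-constraint rank characterization of vertices, observe that integral nodes and edges touching an integral endpoint contribute constraints that pin down their own coordinates without restricting the fractional node variables (the paper phrases this as the mixed constraints being ``rank $1$'' while introducing a new edge variable), and thereby reduce vertexhood to the full-rank condition of system \eqref{fractional-eq}, split per connected component by block-diagonality. Your write-up is more explicit than the paper's (which leaves the edge case analysis as ``easy to check''), but the underlying argument is the same.
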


\begin{lemma}\label{full-rank-cycle}
Let $q \in \{0, \frac{1}{2}, 1\}^{n + m}$ be a point of the local polytope. Let $G_F = (V_F, E_F)$ be a connected subgraph induced from nodes such that such that $q_i = \frac{1}{2}$ for all $i \in V_F$. $G_F$ is full rank if and only if $G_F$ contains cycle that is full rank.
\end{lemma}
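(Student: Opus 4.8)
The plan is to reduce the linear-algebraic ``full rank'' condition on the system \eqref{fractional-eq} to a purely combinatorial statement about signed cycles in $G_F$, and then to observe that a single cycle is full rank precisely when it is frustrated. First I would record the only two possibilities for an edge inside a fractional component: since $q_i = q_j = \tfrac12$ for every $ij \in E_F$ and $q$ is half-integral with $0 \le q_{ij} \le \min\{q_i,q_j\} = \tfrac12$, each edge satisfies $q_{ij} \in \{0, \tfrac12\}$. Thus the equations in \eqref{fractional-eq} split cleanly into \emph{zero edges} ($q_{ij}=0$) and \emph{half edges} ($q_{ij} = \tfrac12$).

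Next I would pass to the homogeneous version of the system (writing $x_i, y_{ij}$ for perturbations of $q_i, q_{ij}$), since full rank is equivalent to this system admitting only the trivial solution. The key simplification is eliminating the edge variables: every edge $ij$ has a ``private'' equation forcing $y_{ij}$ to be a function of its endpoints, with a zero edge giving $y_{ij} = 0$ together with the residual relation $x_i + x_j = 0$, and a half edge giving $y_{ij} = x_i = x_j$ together with the residual relation $x_i = x_j$. Each edge therefore contributes exactly one pivot in its own $y$-column, so the rank of \eqref{fractional-eq} equals $|E_F|$ plus the rank of the reduced node system on the variables $\{x_i\}_{i \in V_F}$. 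Since full rank means the total rank equals $|V_F| + |E_F|$, I conclude that $G_F$ is full rank if and only if the reduced node system $\{x_i = \epsilon_{ij} x_j : ij \in E_F\}$ --- with $\epsilon_{ij} = -1$ on zero edges and $\epsilon_{ij} = +1$ on half edges --- admits only $x \equiv 0$.

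I would then resolve this node system combinatorially using a spanning tree of the connected graph $G_F$. Fixing the value $x_r$ at a root propagates uniquely along the tree, expressing every $x_i$ as $\pm x_r$ with the sign given by the product of the $\epsilon$'s along the tree path; this produces a solution space of dimension at most one. Adding back a non-tree edge imposes a consistency condition around its fundamental cycle, and the system collapses to the trivial solution exactly when some cycle has sign product $-1$, i.e.\ an odd number of zero edges, which is precisely a frustrated cycle. Because frustration parity is additive over the cycle space, it suffices to check fundamental cycles, so $G_F$ is full rank if and only if it contains a frustrated cycle. Applying the identical analysis to a single cycle $C$ shows that $C$ is full rank if and only if it is frustrated, and the lemma follows by combining the two equivalences.

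The main obstacle I anticipate is the rank bookkeeping in the elimination step --- making precise that each edge contributes exactly one independent pivot to its own edge-variable column, so that ``full rank of \eqref{fractional-eq}'' reduces cleanly to ``full column rank of the node system.'' The spanning-tree argument is then routine once one observes that frustration parity is a $\mathbb{Z}/2$-homomorphism on the cycle space, so that only the fundamental cycles need to be examined; in particular a tree, having no cycles, is never full rank, matching the dimension count $|E_F| = |V_F| - 1 < |V_F|$.
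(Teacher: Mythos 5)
Your proof is correct, but it takes a genuinely different route from the paper's. The paper proves this lemma by an incremental construction: for the ``if'' direction it starts from the full-rank cycle and adds one connected edge at a time, observing from Equations \eqref{fractional-eq} that each newly introduced variable is forced to a fixed value; for the ``only if'' direction it again builds the graph edge by edge and counts ranks (an edge to a new node adds two variables and two equations; a chord adds one variable and two equations, of which only one is independent given that the resulting cycle is not full rank), so the deficiency never closes. Crucially, the paper's proof of this lemma never mentions frustration --- the equivalence between full-rank cycles and frustrated cycles is deferred to Lemma \ref{frustrated-full-rank}, which is in turn proved via the cyclic sign-matrix Lemma \ref{full-rank-vectors}. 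You instead eliminate the edge variables (each edge's private pivot in its own $y$-column, which is legitimate since each $q_{ij}$ appears only in its own edge's two equations) to reduce \eqref{fractional-eq} to the signed consistency system $x_i = \epsilon_{ij} x_j$ on the nodes, and resolve that system by spanning-tree propagation plus parity over the cycle space; this proves ``$G_F$ is full rank iff it contains a frustrated cycle'' and ``a cycle is full rank iff frustrated'' in one stroke, from which the stated lemma follows. Your route is more self-contained --- it subsumes Lemmas \ref{frustrated-full-rank} and \ref{full-rank-vectors}, and it replaces the paper's somewhat delicate incremental rank bookkeeping (in particular the claim that a chord's second equation can be produced from the existing cycle equations) with a clean block-triangular rank decomposition and a standard signed-graph balance argument. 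What the paper's modular approach buys in exchange is a constructive picture of how fixed values propagate outward from a full-rank cycle, and a lemma statement whose proof is independent of the frustration characterization.
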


\begin{lemma}\label{frustrated-full-rank}
Let $q \in \{0, \frac{1}{2}, 1\}^{n + m}$ be a point of the local polytope. Let $C = (V_C, E_C)$ be a cycle of $G$ such that $q_i$ is fractional for all $i \in V_C$. $C$ is full rank if and only if $C$ is a frustrated cycle.
\end{lemma}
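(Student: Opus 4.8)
The plan is to apply the full-rank definition directly to the cycle $C=(V_C,E_C)$, exploiting that a cycle has exactly as many edges as nodes. First I would record that since every node of $C$ has $q_i=\tfrac12$, the polytope constraints pin each edge variable into $q_{ij}\in\{0,\tfrac12\}$, because $q_{ij}\ge\max\{0,q_i+q_j-1\}=0$ and $q_{ij}\le\min\{q_i,q_j\}=\tfrac12$. Writing $\ell=\lvert V_C\rvert=\lvert E_C\rvert$, the system \eqref{fractional-eq} then consists of exactly $2\ell$ equations in the $2\ell$ variables $\{q_i\}_{i\in V_C}\cup\{q_{ij}\}_{ij\in E_C}$, so proving ``$C$ is full rank'' amounts to deciding when this square system has rank $2\ell$.

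The first reduction is to eliminate the edge variables. Each edge contributes two equations of \eqref{fractional-eq}, and in both cases one of them pins the edge variable directly: a $0$-edge has the equation $q_{ij}=0$, while a $\tfrac12$-edge has $q_i-q_{ij}=0$. Since each edge variable $q_{ij}$ occurs only in the two equations of its own edge, these $\ell$ pinning equations form a block that can be used to substitute every $q_{ij}$ away; this is where I expect the main bookkeeping obstacle, and I would make it rigorous by viewing the elimination as block-triangular row operations (equivalently a Schur complement), so that the edge variables contribute exactly $\ell$ to the rank and the total rank is $\ell$ plus the rank of the residual system in the node variables. After substitution the residual equation of a $0$-edge becomes $q_i+q_j=1$ and that of a $\tfrac12$-edge becomes $q_i-q_j=0$, so $C$ is full rank if and only if this residual $\ell\times\ell$ node system has trivial kernel.

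The coefficient matrix of the homogeneous node system is a signed incidence matrix of the cycle: each edge $ij$ contributes a row with a $+1$ at node $i$ and a coefficient $\sigma_{ij}$ at node $j$, where $\sigma_{ij}=-1$ on a $\tfrac12$-edge and $\sigma_{ij}=+1$ on a $0$-edge. I would then compute the kernel by traversing the cycle: a kernel vector $u$ must satisfy $u_j=-\sigma_{ij}\,u_i$ across each edge, so the value of $u$ at one node propagates to determine it everywhere, and closing the loop yields the single consistency relation $u_{v_1}=\bigl(\textstyle\prod_{ij\in E_C}(-\sigma_{ij})\bigr)u_{v_1}$. Since $-\sigma_{ij}=+1$ on $\tfrac12$-edges and $-\sigma_{ij}=-1$ on $0$-edges, the accumulated factor equals $(-1)^{r}$, where $r$ is the number of $0$-edges. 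When $r$ is even the factor is $+1$ and the relation is vacuous, giving a one-dimensional kernel and rank $\ell-1$; when $r$ is odd the factor is $-1$, forcing $u_{v_1}=0$ and hence $u=0$, so the kernel is trivial and the rank is $\ell$. Thus $C$ is full rank exactly when $r$ is odd, which by definition is exactly when $C$ is frustrated, completing the proof; the parity computation is the conceptual heart, but it is routine once the edge-variable elimination above is justified.
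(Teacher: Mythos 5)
Your proof is correct, and its skeleton matches the paper's: both arguments reduce the question to the rank of the signed system obtained after eliminating the edge variables, in which a $0$-edge contributes a row with coefficients $(1,1)$ on its endpoints and a $\frac12$-edge contributes $(1,-1)$, and both decide full rank by the parity of the number of $0$-edges. The difference is in how that parity criterion is established. The paper isolates it as a standalone statement (Lemma \ref{full-rank-vectors}) about cyclically arranged vectors $v_i = (0,\ldots,0,1,t_i,0,\ldots,0)$ with $t_i \in \{-1,1\}$, and proves it by explicitly constructing partial sums $S_{i+1} = S_i \pm v_{i+1}$ and counting sign changes, exhibiting either $(1,0,\ldots,0)$ in the span (odd case) or the dependence $v_n \in \spn(v_1,\ldots,v_{n-1})$ (even case). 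You instead compute the null space: a kernel vector must propagate around the cycle with a sign flip at each $0$-edge, and closing the loop forces $u_{v_1} = (-1)^r u_{v_1}$, so the kernel is trivial exactly when $r$ is odd. These are dual views of the same computation (row span versus kernel), and yours is arguably the cleaner one since the even and odd cases fall out of a single consistency relation rather than a case check over $t_1$ and $t_n$. You also make rigorous a step the paper leaves implicit: the paper asserts that Lemma \ref{frustrated-full-rank} follows from Lemma \ref{full-rank-vectors} without spelling out why the $2\ell \times 2\ell$ system of Equation \eqref{fractional-eq} has rank $\ell$ plus the rank of the residual node system, whereas your block-triangular (Schur complement) elimination of the edge variables supplies exactly that justification.
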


\begin{proof}[Proof of Lemma \ref{connected-components}]
Suppose every connected component is full rank. Then every fractional node and edge between fractional nodes is fixed from their corresponding equations in Problem \ref{local-polytope}. It is easy to check that all integral nodes, edges between integral nodes, and edges between integral and fractional nodes is also fixed. Thus $q$ is a vertex.

Now suppose that there exists a connected component that is not full rank. The only other constraints involving this connected component are those between fractional nodes and integral nodes. However, note that these constraints are always rank $1$, and also introduce a new edge variable. Thus all the constraints where $q$ is tight do not make a full rank system of equations.
\end{proof}

\begin{proof}[Proof of Lemma \ref{full-rank-cycle}]
Suppose $G_F$ has a full rank cycle. We will build the graph starting with the full rank cycle then adding one connected edge at a time. It is easy to see from Equations \ref{fractional-eq} that all new variables introduced to the system of equations have a fixed value, and thus the whole connected component is full rank.

Now suppose $G_F$ has no full rank cycle. We will again build the graph starting from the cycle then adding one connected edge at a time. If we add an edge that connects to a new node, then we added two variables and two equations, thus we did not make the graph full rank. If we add an edge between two existing nodes, then we have a cycle involving this edge. We introduce two new equations, however with one of the equations and the other cycle equations, we can produce the other equation, thus we can increase the rank by one but we also introduced a new edge. Thus the whole graph cannot be full rank.
\end{proof}

The proof of Lemma \ref{frustrated-full-rank} from the following lemma.

\begin{lemma}\label{full-rank-vectors}
Consider a collection of $n$ vectors
\begin{align*}
v_1 &= (1, t_1, 0, \ldots, 0) \\
v_2 &= (0, 1, t_2, 0, \ldots, 0) \\
v_3 &= (0, 0, 1, t_3, 0, \ldots, 0) \\
\vdots \\
v_{n-1} &= (0,  \ldots, 0, 1, t_{n-1}) \\
v_n &= (t_n, 0, \ldots, 0, 1)
\end{align*}
for $t_i \in \{-1, 1\}$. We have $\rank(v_1, v_2, \ldots, v_n) = n$ if and only if there is an odd number of vectors such that $t_i = 1$.
\end{lemma}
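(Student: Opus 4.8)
The plan is to recognize the collection $v_1, \ldots, v_n$ as the rows of the cyclic bidiagonal matrix $M$ whose diagonal entries are all $1$, whose superdiagonal entries are $t_1, \ldots, t_{n-1}$, and whose single bottom-left corner entry (position $(n,1)$) is $t_n$. Since the rank equals $n$ exactly when $M$ is invertible, it suffices to compute $\det M$ and show that it vanishes precisely when the number of indices with $t_i = 1$ is even.

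First I would compute the determinant by cofactor expansion along the first column, which contains only two nonzero entries: the $1$ in position $(1,1)$ and the $t_n$ in position $(n,1)$. Deleting row $1$ and column $1$ leaves an upper-triangular matrix with unit diagonal, contributing $1$; deleting row $n$ and column $1$ leaves a triangular matrix with diagonal $t_1, \ldots, t_{n-1}$, contributing $\prod_{i=1}^{n-1} t_i$ carried by the cofactor sign $(-1)^{n+1}$. Collecting the two terms gives
\[
\det M = 1 + (-1)^{n+1}\prod_{i=1}^n t_i.
\]
Equivalently, one can argue combinatorially: the only permutations $\sigma$ with every factor $M_{i,\sigma(i)}$ nonzero are the identity (using the unit diagonal) and the full $n$-cycle $i \mapsto i+1 \pmod n$ (using every $t_i$), and these contribute $1$ and $(-1)^{n-1}\prod_i t_i$ respectively, matching the formula above.

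Finally I would convert the product of signs into a parity count. Writing $j$ for the number of indices with $t_i = 1$, so that $n-j$ indices have $t_i = -1$, we have $\prod_i t_i = (-1)^{n-j}$, and substituting yields $\det M = 1 - (-1)^j$. This equals $2$ when $j$ is odd and $0$ when $j$ is even, which is exactly the claimed equivalence between full rank and an odd number of $t_i = 1$. The only delicate point is tracking the cofactor sign $(-1)^{n+1}$ and verifying that the $n$-dependence cancels against the parity of $\prod_i t_i$; once that bookkeeping is done the result is immediate, so I do not anticipate any substantive obstacle.
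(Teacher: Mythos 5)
Your proof is correct, and it takes a genuinely different route from the paper's. You identify the vectors as rows of a cyclic bidiagonal matrix $M$ and compute its determinant in closed form, $\det M = 1 + (-1)^{n+1}\prod_{i=1}^n t_i$, via cofactor expansion along the first column (or equivalently the permutation expansion, where only the identity and the full $n$-cycle contribute); the parity substitution $\prod_i t_i = (-1)^{n-j}$, with $j$ the number of indices where $t_i = 1$, then gives $\det M = 1 - (-1)^j$, which vanishes exactly when $j$ is even. Your sign bookkeeping checks out: the $(-1)^{n+1}$ cofactor sign cancels against the $n$-dependence of $(-1)^{n-j}$ exactly as you claim. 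The paper instead argues by elimination: it sets $S_1 = v_1$ and iteratively adds or subtracts $v_{i+1}$ so as to cancel the leading surviving coordinate, observes that the sign of the propagated entry flips precisely when $t_{i+1} = 1$, and finishes with a case check on $t_1$ and $t_n$ to decide whether $(1,0,\ldots,0)$ lies in the span (odd case, whence full rank) or $v_n \in \spn(v_1,\ldots,v_{n-1})$ (even case). Your determinant computation buys a self-contained closed-form criterion and avoids the paper's final case analysis, which is only sketched (``we can now check for all values of $t_1$ and $t_n$''); the paper's elimination argument is more elementary in that it never invokes determinants, and it makes explicit the sign-propagation-around-the-cycle phenomenon that your $n$-cycle permutation term encodes algebraically.
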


\begin{proof}[Proof of Lemma \ref{full-rank-vectors}]
Let $k$ be the number of vectors such that $t_i = 1$. Let $S_1 = v_1$ and define
\[
S_{i+1} = 
\begin{cases}
S_i - v_{i+1} & \text{if }S_i(i+1) = 1 \\
S_i + v_{i+1} & \text{if } S_i(i+1) = -1
\end{cases}
\]
for $i = 2, \ldots, n-1$.

Note that if $t_{i+1} = -1$ then $S_{i+1}(i+2) = S_i(i+1)$ and if $t_{i+1} = 1$ then $S_{i+1}(i+2) = -S_i(i+1)$. Thus the number of times the sign changes is exactly the number of $t_i = 1$ for $i \in \{2, \ldots, n-1\}$.

Using the value of $S_{n-1}$ we can now we can check for all values of $t_1$ and $ t_n$ that the following is true.
\begin{itemize}
\item If $k$ is odd then $(1, 0, \ldots, 0) \in \spn(v_1, v_2, \ldots, v_n)$, which allows us to create the entire standard basis, showing the vectors are full rank.
\item If $k$ is even then $v_n \in \spn(v_1, v_2, \ldots, v_{n-1})$ and thus the vectors are not full rank.
\end{itemize}
\end{proof}

\section{Estimating the number of Confounding Singleton Marginals}

In Theorem \ref{mip}, note that $S(V_C)$ is the number of fractional singleton marginals $q_V$ such that there exists a set of pairwise marginals $q_E$ such that $(q_V, q_E)$ is a confounding vertex. In this case we call $q_V$ a confounding singleton marginal. We develop Algorithm \ref{svf} to estimate the number of confounding singleton marginals for our experiments section. It is based on the $k$-best enumeration method developed in \cite{murty1968letter, lawler1972procedure}.

Algorithm \ref{svf} works by a branching argument. The root node is the original LP. A leaf node is branched on by introducing a new leaf for every node in $V$ and every element of $\{0, \frac{1}{2}, 1\}$ such that $q_i \neq a$ in the parent node and the constraint $\{q_i = a\}$ is not in the constraints for the parent node.  For $i \in V$, $a \in \{0, \frac{1}{2}, 1\}$, we create the leaf such that it has all the constraints of its parents plus the constraint $q_i = a$.

Note that Algorithm \label{svf} actually generates a superset of the elements of $S(V_C)$, since the introduction of constraints of the type $q_i = \frac{1}{2}$ introduce vertices into the new polytope that were not in the original polytope. This does not seem to be an issue for the experiments we consider, however this does occur for other graphs. An interesting question is if the vertices of the local polytope can be provably enumerated.

\begin{algorithm}
\begin{algorithmic}
\STATE Input: a binary, pairwise graphical model LP
\STATE
\STATE def $\mathrm{LP}(I_0, I_{\frac{1}{2}}, I_1)$:
\STATE\hspace{\algorithmicindent}optimize LP with additional constraints:
\STATE\hspace{\algorithmicindent}\hspace{\algorithmicindent} $x_{I_0}=0$
\STATE\hspace{\algorithmicindent}\hspace{\algorithmicindent} $x_{I_\frac{1}{2}}=\frac{1}{2}$
\STATE\hspace{\algorithmicindent}\hspace{\algorithmicindent} $x_{I_1}=1$
\STATE\hspace{\algorithmicindent}return $q^*$ if feasible, else return null
\STATE
\STATE $q \gets \mathrm{LP}(\emptyset, \emptyset, \emptyset)$
\STATE $B \gets \{(q,\emptyset, \emptyset, \emptyset)\}$
\STATE $\mathrm{solution} \gets \{\ \}$
\WHILE{optimal integral vertex not found:}
	\STATE $(q, I_0, I_{\frac{1}{2}}, I_1) \gets \max_B$ objective value of $q$
	\STATE add $q$ to $\mathrm{solution}$
	\STATE remove $(q, I_0, I_{\frac{1}{2}}, I_1)$ from $B$
	\FOR{$i \in V$ if $i \notin I_0 \cup I_{\frac{1}{2}} \cup I_1$:}
		\FOR{$a \in \{0, \frac{1}{2}, 1\}$ if $q_i \neq a$:}
			\STATE  $I'_0, I'_{\frac{1}{2}}, I'_1 \gets \mathrm{copy}(I_0, I_{\frac{1}{2}}, I_1)$
			\STATE $I'_a \gets I'_a \cup \{i\}$
			\STATE $q' \gets \mathrm{LP}(I'_0, I'_{\frac{1}{2}}, I'_1)$
			\STATE add $(q', I'_0, I'_{\frac{1}{2}}, I'_1)$ to $B$ if feasible
		\ENDFOR
	\ENDFOR
\ENDWHILE
\STATE return $\mathrm{solution}$
\end{algorithmic}
\caption{Estimate $S(V_C)$ for Binary, Pairwise Graphical Models}
\label{svf}
\end{algorithm}

\section{Experiments}

We consider a synthetic experiment on randomly created graphical models, which were also used in \cite{sontag2007new, weller2016uprooting, weller2014understanding}. The graph topology used is the complete graph on 12 nodes. We first reparametrize the model to use the sufficient statistics $\mathbbm{1}(x_i = x_j) $ and $\mathbbm{1}(x_i = 1)$. The node weights are drawn $\theta_i \sim \text{Uniform}(-1, 1)$ and the edge weights are drawn $W_{ij} \sim \text{Uniform}(-w, w)$ for varying $w$. The quantity $w$ determines how strong the connections are between nodes. We do 100 draws for each choice of edge strength $w$.

For the complete graph, we observe that Algorithm \ref{svf} does not yield any points that do not correspond to vertices, however this does occur for other topologies.

We first compare how the number of fractional singleton marginals $\vert S(V_C) \vert$ changes with the connection strength $w$. In Figure \ref{cdf}, we plot the sample CDF of the probability that $\vert S(V_C) \vert$ is some given value. We observe that $\vert S(V_C) \vert$ increases as the connection strength increases. Further we see that while most instances have a small number for $\vert S(V_C) \vert$, there are rare instances where $\vert S(V_C) \vert$ is quite large.

\begin{figure}
\centering
\begin{tikzpicture}
\begin{axis}[
        xmode = log,
        legend style={at={(0.97,0.05)}, anchor=south east},
        xlabel = $\vert S(V_C) \vert$,
        ylabel = $\prob(\vert S(V_C) \vert \leq t)$
]
\addplot[mark=none, thick] table [x=x, y=y, col sep=comma] {cdf_complete_n=12_w=0.3.csv};
\addplot[mark=none, thick, dashed] table [x=x, y=y, col sep=comma] {cdf_complete_n=12_w=0.2.csv};
\addplot[mark=none, ultra thick, dotted] table [x=x, y=y, col sep=comma] {cdf_complete_n=12_w=0.1.csv};
\legend{$w=0.1$,$w=0.2$,$w=2.0$}
\end{axis}
scaled y ticks=false
\end{tikzpicture}
\label{cdf}
\caption{We fcompare how the number of fractional singleton marginals $\vert S(V_C) \vert$ changes with the connection strength $w$. We plot the sample CDF of the probability that $\vert S(V_C) \vert$ is some given value. We observe that $\vert S(V_C) \vert$ increases as the connection strength increases. Further we see that while most instances have a small number for $\vert S(V_C) \vert$, there are rare instances where $\vert S(V_C) \vert$ is quite large.}
\end{figure}
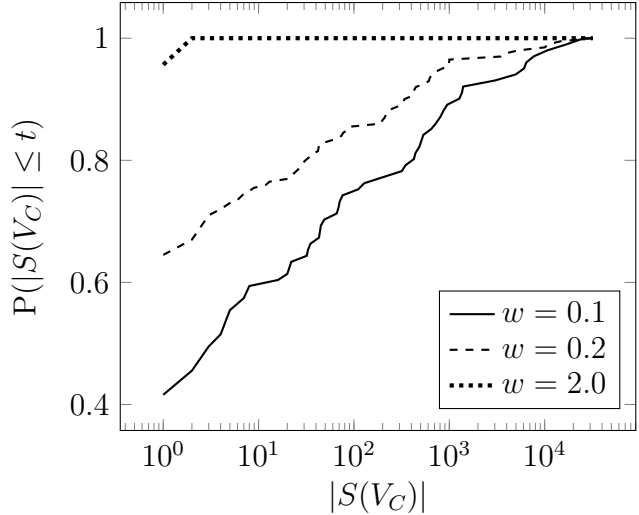

Now we compare how the number of cycle constraints from Equation \eqref{cycle-constraints} that need to be introduced to find the best integral solution changes with the number of confounding singleton marginals in Figure \ref{cycle_svf}. We use the algorithm for finding the most frustrated cycle in \cite{sontag2007new} to introduce new constraints. We observe that each constraint seems to remove many confounding singleton marginals.

\begin{figure}
\centering
\begin{tikzpicture}
\begin{axis}[
        xmode = log,
        legend style={at={(0.97,0.05)}, anchor=south east},
        xlabel = $\vert S(V_C) \vert$,
        ylabel = \small{\# cycle constraints added}
]
\addplot[only marks, mark=*] table [x=sv_upper, y=num_cuts, col sep=comma] {complete_n=12_w=0.3.csv};
\end{axis}
scaled y ticks=false
\end{tikzpicture}
\caption{We compare how the number of cycle constraints from Equation \eqref{cycle-constraints} that need to be introduced to find the best integral solution changes with the number of confounding singleton marginals. We use the algorithm for finding the most frustrated cycle in \cite{sontag2007new} to introduce new constraints. We observe that each constraint seems to remove many confounding singleton marginals.}
\label{cycle_svf}
\end{figure}
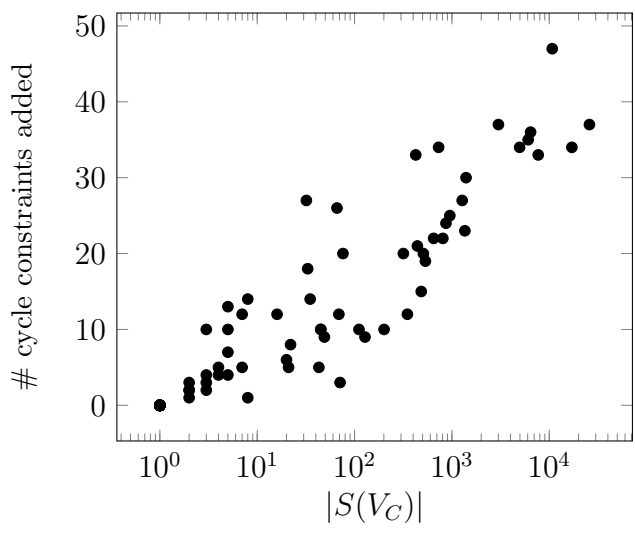

We also observe the number of introduced confounding singleton marginals that are introduced by the cycle constraints increases with the number of confounding singleton marginals in Figure \ref{induced}.

\begin{figure}
\centering
\begin{tikzpicture}
\begin{axis}[
        xmode = log,
        legend style={at={(0.97,0.05)}, anchor=south east},
        xlabel = $\vert S(V_C) \vert$,
        ylabel = \small{\# introduced confounding singleton marginals}
]
\addplot[only marks, mark=*] table [x=sv_upper, y=cut_induced_vertices, col sep=comma] {complete_n=12_w=0.3.csv};
\end{axis}
scaled y ticks=false
\end{tikzpicture}
\caption{We also observe the number of introduced confounding singleton marginals that are introduced by the cycle constraints increases with the number of confounding singleton marginals in.}
\label{induced}
\end{figure}

Finally we compare the number of branches needed to find the optimal solution increases with the number of confounding singleton marginals in Figure~\ref{branches-svf}. A similar trend arises as with the number of cycle inequalities introduced. To compare the methods, note that branch-and-bound uses twice as many LP calls as there are branches. For this family of graphical models, branch-and-bound tends to require less calls to an LP solver than the cut constraints.

\begin{figure}
\centering
\begin{tikzpicture}
\begin{axis}[
        xmode = log,
        legend style={at={(0.97,0.05)}, anchor=south east},
        xlabel = $\vert S(V_C) \vert$,
        ylabel = \small{\# branches}
]
\addplot[only marks, mark=*] table [x=sv_upper, y=num_branches, col sep=comma] {complete_n=12_w=0.3.csv};
\end{axis}
scaled y ticks=false
\end{tikzpicture}
\caption{Finally we compare the number of branches needed to find the optimal solution increases with the number of confounding singleton marginals in Figure~\ref{branches-svf}. A similar trend arises as with the number of cycle inequalities introduced. To compare the methods, note that branch-and-bound uses twice as many LP calls as there are branches. For this family of graphical models, branch-and-bound tends to require less calls to an LP solver than the cut constraints.}
\label{branches-svf}
\end{figure}

\FloatBarrier
\section*{Acknowledgements}

This material is based upon work supported by the National Science Foundation Graduate Research Fellowship under Grant No. DGE-1110007 as well as NSF Grants CCF 1344364, 1407278, 1422549, 1618689, 1018829 and ARO YIP W911NF-14-1-0258.

\bibliography{research}

\begin{thebibliography}{10}

\bibitem{angulo2014forbidden}
Gustavo Angulo, Shabbir Ahmed, Santanu~S Dey, and Volker Kaibel.
\newblock Forbidden vertices.
\newblock {\em Mathematics of Operations Research}, 40(2):350--360, 2014.

\bibitem{arora2009message}
Sanjeev Arora, Constantinos Daskalakis, and David Steurer.
\newblock Message passing algorithms and improved lp decoding.
\newblock In {\em Symposium on Theory of Computing}, pages 3--12. ACM, 2009.

\bibitem{batra2012diverse}
Dhruv Batra, Payman Yadollahpour, Abner Guzman-Rivera, and Gregory
  Shakhnarovich.
\newblock Diverse m-best solutions in markov random fields.
\newblock In {\em European Conference on Computer Vision}, pages 1--16.
  Springer, 2012.

\bibitem{bertsimas1997introduction}
Dimitris Bertsimas and John~N Tsitsiklis.
\newblock {\em Introduction to linear optimization}, volume~6.
\newblock Athena Scientific Belmont, MA, 1997.

\bibitem{boros2011negative}
Endre Boros, Khaled Elbassioni, Vladimir Gurvich, and Hans~Raj Tiwary.
\newblock The negative cycles polyhedron and hardness of checking some
  polyhedral properties.
\newblock {\em Annals of Operations Research}, 188(1):63--76, 2011.

\bibitem{burshtein2009iterative}
David Burshtein.
\newblock Iterative approximate linear programming decoding of {LDPC} codes
  with linear complexity.
\newblock {\em IEEE Transactions on Information Theory}, 55(11):4835--4859,
  2009.

\bibitem{chertkov2008efficient}
Michael Chertkov and Mikhail~G Stepanov.
\newblock An efficient pseudocodeword search algorithm for linear programming
  decoding of {LDPC} codes.
\newblock {\em IEEE Transactions on Information Theory}, 54(4):1514--1520,
  2008.

\bibitem{daskalakis2008probabilistic}
Constantinos Daskalakis, Alexandros~G Dimakis, Richard~M Karp, and Martin~J
  Wainwright.
\newblock Probabilistic analysis of linear programming decoding.
\newblock {\em IEEE Transactions on Information Theory}, 54(8):3565--3578,
  2008.

\bibitem{dimakis2009guessing}
Alexandros~G Dimakis, Amin~A Gohari, and Martin~J Wainwright.
\newblock Guessing facets: Polytope structure and improved {LP} decoder.
\newblock {\em IEEE Transactions on Information Theory}, 55(8):3479--3487,
  2009.

\bibitem{dimakis2007unequal}
Alexandros~G Dimakis, Jiajun Wang, and Kannan Ramchandran.
\newblock Unequal growth codes: Intermediate performance and unequal error
  protection for video streaming.
\newblock In {\em Multimedia Signal Processing}, pages 107--110. IEEE, 2007.

\bibitem{eppstein2014k}
D.~Eppstein.
\newblock k-best enumeration.
\newblock {\em Encyclopedia of Algorithms}, 2014.

\bibitem{feldman2007lp}
Jon Feldman, Tal Malkin, Rocco~A Servedio, Cliff Stein, and Martin~J
  Wainwright.
\newblock {LP} decoding corrects a constant fraction of errors.
\newblock {\em IEEE Transactions on Information Theory}, 53(1):82--89, 2007.

\bibitem{feldman2005using}
Jon Feldman, Martin~J Wainwright, and David~R Karger.
\newblock Using linear programming to decode binary linear codes.
\newblock {\em IEEE Transactions on Information Theory}, 51(3):954--972, 2005.

\bibitem{fromer2009lp}
Menachem Fromer and Amir Globerson.
\newblock An {LP} view of the {M}-best {MAP} problem.
\newblock In {\em NIPS}, volume~22, pages 567--575, 2009.

\bibitem{jebara2009map}
Tony Jebara.
\newblock Map estimation, message passing, and perfect graphs.
\newblock In {\em UAI}, pages 258--267. AUAI Press, 2009.

\bibitem{kappes2013comparative}
Joerg Kappes, Bjoern Andres, Fred Hamprecht, Christoph Schnorr, Sebastian
  Nowozin, Dhruv Batra, Sungwoong Kim, Bernhard Kausler, Jan Lellmann, Nikos
  Komodakis, et~al.
\newblock A comparative study of modern inference techniques for discrete
  energy minimization problems.
\newblock In {\em Proceedings of the IEEE Conference on Computer Vision and
  Pattern Recognition}, pages 1328--1335, 2013.

\bibitem{khachiyan2008generating}
Leonid Khachiyan, Endre Boros, Konrad Borys, Khaled Elbassioni, and Vladimir
  Gurvich.
\newblock Generating all vertices of a polyhedron is hard.
\newblock {\em Discrete \& Computational Geometry}, 39(1-3):174--190, 2008.

\bibitem{koetter2007characterizations}
Ralf Koetter, Wen-Ching~W Li, Pascal~O Vontobel, and Judy~L Walker.
\newblock Characterizations of pseudo-codewords of (low-density) parity-check
  codes.
\newblock {\em Advances in Mathematics}, 213(1):205--229, 2007.

\bibitem{komodakis2008beyond}
Nikos Komodakis and Nikos Paragios.
\newblock Beyond loose {LP}-relaxations: Optimizing {MRFs} by repairing cycles.
\newblock In {\em European Conference on Computer Vision}, pages 806--820.
  Springer, 2008.

\bibitem{land1960automatic}
Ailsa~H Land and Alison~G Doig.
\newblock An automatic method of solving discrete programming problems.
\newblock {\em Econometrica: Journal of the Econometric Society}, pages
  497--520, 1960.

\bibitem{lawler1972procedure}
Eugene~L Lawler.
\newblock A procedure for computing the k best solutions to discrete
  optimization problems and its application to the shortest path problem.
\newblock {\em Management science}, 18(7):401--405, 1972.

\bibitem{liu2012suppressing}
Xishuo Liu, Stark~C Draper, and Benjamin Recht.
\newblock Suppressing pseudocodewords by penalizing the objective of {LP}
  decoding.
\newblock In {\em Information Theory Workshop}, pages 367--371. IEEE, 2012.

\bibitem{murty1968letter}
Katta~G Murty.
\newblock Letter to the editor—an algorithm for ranking all the assignments
  in order of increasing cost.
\newblock {\em Operations research}, 16(3):682--687, 1968.

\bibitem{nemhauser1999integer}
George~L Nemhauser and Laurence~A Wolsey.
\newblock {\em Integer and Combinatorial Optimization}.
\newblock John Wiley \& Sons, 1999.

\bibitem{padberg1989boolean}
Manfred Padberg.
\newblock The boolean quadric polytope: some characteristics, facets and
  relatives.
\newblock {\em Mathematical programming}, 45(1):139--172, 1989.

\bibitem{sontag2007new}
David Sontag and Tommi~S Jaakkola.
\newblock New outer bounds on the marginal polytope.
\newblock In {\em NIPS}, volume~20, pages 1393--1400, 2007.

\bibitem{sontag2012efficiently}
David Sontag, Yitao Li, et~al.
\newblock Efficiently searching for frustrated cycles in map inference.
\newblock In {\em UAI}, 2012.

\bibitem{sontag2008tightening}
David Sontag, Talya Meltzer, Amir Globerson, Tommi Jaakkola, and Yair Weiss.
\newblock Tightening {LP} relaxations for map using message passing.
\newblock In {\em UAI}, pages 503--510. AUAI Press, 2008.

\bibitem{wainwright2003tree}
Martin~J Wainwright, Tommi~S Jaakkola, and Alan~S Willsky.
\newblock Tree-reweighted belief propagation algorithms and approximate {ML}
  estimation by pseudo-moment matching.
\newblock In {\em AISTATS}, 2003.

\bibitem{wainwright2008graphical}
Martin~J Wainwright, Michael~I Jordan, et~al.
\newblock Graphical models, exponential families, and variational inference.
\newblock {\em Foundations and Trends in Machine Learning}, 1(1--2):1--305,
  2008.

\bibitem{weller2016uprooting}
Adrian Weller.
\newblock Uprooting and rerooting graphical models.
\newblock In {\em ICML}, 2016.

\bibitem{weller2016tightness}
Adrian Weller, Mark Rowland, and David Sontag.
\newblock Tightness of {LP} relaxations for almost balanced models.
\newblock In {\em AISTATS}, 2016.

\bibitem{weller2016train}
Adrian Weller and David Sontag.
\newblock Train and test tightness of {LP} relaxations in structured
  prediction.
\newblock In {\em ICML}, 2016.

\bibitem{weller2014understanding}
Adrian Weller, Kui Tang, Tony Jebara, and David Sontag.
\newblock Understanding the bethe approximation: when and how can it go wrong?
\newblock In {\em UAI}, pages 868--877, 2014.

\bibitem{yedidia2000generalized}
Jonathan~S Yedidia, William~T Freeman, Yair Weiss, et~al.
\newblock Generalized belief propagation.
\newblock In {\em NIPS}, volume~13, pages 689--695, 2000.

\bibitem{zhang2012adaptive}
Xiaojie Zhang and Paul~H Siegel.
\newblock Adaptive cut generation algorithm for improved linear programming
  decoding of binary linear codes.
\newblock {\em IEEE Transactions on Information Theory}, 58(10):6581--6594,
  2012.

\end{thebibliography}
\bibliographystyle{plain}

\end{document}